\documentclass{article}

\PassOptionsToPackage{numbers,sort,compress}{natbib}
\usepackage[final]{neurips_2026}
\usepackage[utf8]{inputenc}
\usepackage[T1]{fontenc}
\usepackage{hyperref}
\usepackage{url}
\usepackage{booktabs}
\usepackage{amsmath,amssymb,amsthm}
\usepackage{mathtools}
\usepackage{graphicx}
\usepackage{multirow}
\usepackage[capitalize]{cleveref}
\usepackage{algorithm}
\usepackage[noend]{algpseudocode}
\usepackage{xcolor}
\usepackage{enumitem}
\usepackage{tikz}
\usetikzlibrary{arrows.meta,positioning,shapes.geometric,fit}

\newtheorem{theorem}{Theorem}

\setcitestyle{numbers,square,comma,sort,compress}

\title{Spectral-LSH: Sub-Quadratic Prompt Compression via Krylov-Projected Locality-Sensitive Hashing}

\author{%
  Ali Mahdavi \\
  Department of Computer Engineering\\
  SRC, Islamic Azad University\\
  Tehran, Iran \\
  \texttt{ali.mahdavi@iau.ir} \\
  \and
  Azadeh Zamanifar \\
  Department of Computer Engineering\\
  SRC, Islamic Azad University\\
  Tehran, Iran \\
  \texttt{azamanifar@iau.ac.ir} \\
  \and
  Amir Farhad Farhadi \\
  School of Computer Engineering \\
  Iran University of Science and Technology \\
  Tehran, Iran \\
  \texttt{amfarhadi@mail.iust.ac.ir} \\
  \and
  Omid Kashefi \\
  Meta \\
  CA, USA \\
  \texttt{kashefi@meta.com} \\
}

\begin{document}
\makeatletter
\providecommand{\@trackname}{}
\makeatother

\maketitle

\begin{abstract}
Long-prompt inference remains expensive because prefill attention scales quadratically with sequence length. We propose Spectral-LSH, a training-free prompt compression method that operates before the prompt enters the language model. Spectral-LSH approximates the dominant components of an implicit attention-kernel operator with a Krylov subspace method and random features, avoiding explicit $O(N^2)$ attention-kernel materialization. It then applies SimHash in the resulting attention eigenspace to group similar tokens and aggregate them into macro-tokens with causal positional assignments. We evaluate Mistral-7B-Instruct-v0.3, Qwen2.5-7B-Instruct, and Qwen2.5-14B-Instruct on C4. We identify a compression-ratio phase transition. Below $\rho=4\times$, local token redundancy is low enough that lightweight chunking is usually the best latency-quality trade-off. Above $\rho=8\times$, the spectral path can preserve quality that chunking loses: at $\rho=16\times$, Qwen2.5-7B adaptive reduces the PPL ratio from $353.409$ to $196.963$, and Qwen2.5-14B adaptive reduces it from $9.533$ to $3.427$. On a small long-context structured stress test with JSON-like, code-like, and table-like inputs, local LSH also improves every metric over chunk at $8\times$. The adaptive backend captures both regimes by using the chunk path at low compression and spectral clustering at high compression, although chunk remains the fastest backend in total latency.
\end{abstract}

\section{Introduction}

Large language models with extended context windows support long-document understanding and multi-turn reasoning, but long prompts remain costly at inference time. During prefill, self-attention scales quadratically with the sequence length $N$, so long prompts can consume substantial latency and GPU memory. This cost limits deployment in applications that need long-context inference under tight latency or memory budgets.

Existing solutions fall into three categories. \textit{Architecture modifications} alter the attention mechanism via sparsity~\citep{child2019sparse,zaheer2020bigbird}, linearized attention~\citep{katharopoulos2020linear,choromanski2021performer}, or state-space models~\citep{gu2022mamba}, but require retraining. \textit{KV-cache compression} methods such as SnapKV~\citep{li2024snapkv}, H2O~\citep{zhang2023h2o}, and StreamingLLM~\citep{xiao2024streamingllm} reduce stored key-value pairs during decoding but leave prefill latency unaddressed. \textit{Prompt compression} methods like LLMLingua~\citep{jiang2023llmlingua} and Selective Context~\citep{li2024selective} reduce the prompt itself, but rely on auxiliary smaller models for token scoring, adding overhead.

We address an underexplored setting: training-free prompt compression that requires no auxiliary model and avoids quadratic preprocessing. Spectral-LSH uses the spectral structure of an implicit softmax-attention kernel to identify semantically redundant tokens. Tokens that are similar under this kernel are merged before the prompt enters the LLM, reducing the effective sequence length during prefill.

Our contributions are:

\begin{itemize}[leftmargin=*,nosep]
    \item An implicit attention-kernel operator enabling $O(ND)$ matvec evaluations via Random Fourier Features~\citep{rahimi2007random}.
    \item A Krylov subspace method (Lanczos) on this operator to recover the dominant $r$ eigenvectors in $O(rND)$ time.
    \item Spectral-LSH, which applies SimHash in the attention eigenspace to cluster tokens by attention-based similarity.
    \item A framework for positional encoding aggregation in compressed macro-tokens.
    \item An adaptive compression backend that per-window routes between chunk and spectral clustering using a cheap redundancy heuristic, balancing quality and latency.
    \item Extensive evaluation on Mistral-7B, Qwen2.5-7B, Qwen2.5-14B, and smaller models at compression ratios from $2\times$ to $16\times$, with latency, quality, and scaling benchmarks.
\end{itemize}

\begin{figure}[t]
\centering
\resizebox{\linewidth}{!}{
\begin{tikzpicture}[
    node distance=0.9cm and 1.0cm,
    block/.style={draw, rounded corners, align=center, minimum height=0.9cm, minimum width=2.4cm, fill=blue!6},
    spectral/.style={draw, rounded corners, align=center, minimum height=0.9cm, minimum width=2.6cm, fill=green!8},
    output/.style={draw, rounded corners, align=center, minimum height=0.9cm, minimum width=2.5cm, fill=orange!10},
    arrow/.style={-{Latex[length=2.2mm]}, thick}
]
\node[block] (tokens) {Prompt tokens\\$1,\ldots,N$};
\node[block, right=of tokens] (embed) {Input embeddings\\$X\in\mathbb{R}^{N\times d}$};
\node[spectral, right=of embed] (rff) {RFF map\\$\Phi=\phi(X)$};
\node[spectral, right=of rff] (krylov) {Implicit Lanczos\\$A(v)=\Phi(\Phi^\top v)$};
\node[spectral, below=of krylov] (hash) {SimHash in\\attention eigenspace};
\node[output, left=of hash] (clusters) {Token clusters\\$\mathcal{C}_1,\ldots,\mathcal{C}_M$};
\node[output, left=of clusters] (macro) {Macro-token\\aggregation};
\node[output, left=of macro] (llm) {Frozen LLM\\prefill on $M\ll N$};

\draw[arrow] (tokens) -- (embed);
\draw[arrow] (embed) -- (rff);
\draw[arrow] (rff) -- (krylov);
\draw[arrow] (krylov) -- (hash);
\draw[arrow] (hash) -- (clusters);
\draw[arrow] (clusters) -- (macro);
\draw[arrow] (macro) -- (llm);
\node[draw, dashed, rounded corners, fit=(rff)(krylov)(hash), inner sep=0.18cm, label=above:{\small sub-quadratic spectral compression}] {};
\end{tikzpicture}}
\caption{Spectral-LSH pipeline. The method avoids explicit $N\times N$ attention-kernel construction by using random features for implicit matvecs, Krylov projection for the dominant attention eigenspace, SimHash for grouping, and macro-token aggregation before frozen LLM prefill.}
\label{fig:spectral_lsh_pipeline}
\end{figure}

\section{Related Work}

\paragraph{Efficient attention.} Sparse attention patterns~\citep{child2019sparse,beltagy2020longformer,zaheer2020bigbird} restrict the attention field to $O(N\sqrt{N})$ or $O(N\log N)$ entries. Linearized attention~\citep{katharopoulos2020linear,choromanski2021performer,wang2020linformer,xiong2021nystrom} achieves $O(N)$ complexity via kernel methods or low-rank factorization. All require architectural modification and retraining.

\paragraph{KV-cache compression.} H2O~\citep{zhang2023h2o}, StreamingLLM~\citep{xiao2024streamingllm}, SnapKV~\citep{li2024snapkv}, and FastGen~\citep{ge2024fastgen} evict or aggregate key-value entries during decoding. These reduce decode-phase costs but do not address prefill latency.

\paragraph{Prompt compression.} LLMLingua-2~\citep{pan2024llmlingua2} and LongLLMLingua~\citep{jiang2024longllmlingua} use a smaller LM to assign perplexity scores and remove low-importance tokens. Selective Context~\citep{li2024selective} uses self-information. AdaKV~\citep{feng2024adakv} compresses adaptively. These work with black-box LLMs but require a secondary model for scoring. Our method uses only the target LLM's embedding layer.

\paragraph{Token merging.} ToMe~\citep{bolya2023tome} merges similar tokens during ViT forward passes. Adaptations to language models operate within transformer layers, requiring architectural modification. Our method is a pure preprocessing step.

\paragraph{Locality-sensitive hashing.} Reformer~\citep{kitaev2020reformer} uses LSH for attention routing, grouping queries and keys for sparse computation. Our use differs: we use LSH for token removal (compression), not attention routing. Reformer uses random-projection LSH on raw embeddings; our Spectral-LSH operates in the Krylov-approximated attention eigenspace.

\section{Method}

\subsection{Preliminaries}

Let $X = [\mathbf{x}_1, \ldots, \mathbf{x}_N]^\top \in \mathbb{R}^{N\times d}$ be the token embeddings from the LLM's input embedding layer for a prompt of length $N$. The softmax attention-kernel matrix is defined implicitly as:
\begin{equation}
    A_{ij} = \frac{\exp(\mathbf{x}_i^\top \mathbf{x}_j / \sqrt{d})}{\sum_{k=1}^N \exp(\mathbf{x}_i^\top \mathbf{x}_k / \sqrt{d})},
    \label{eq:softmax_kernel}
\end{equation}
where we take query $=$ key $=X$ for simplicity. The matrix $A\in\mathbb{R}^{N\times N}$ encodes pairwise semantic relationships the LLM's first attention layer would compute. Constructing $A$ explicitly costs $O(N^2d)$, which we avoid.

\textbf{Krylov subspace.} For a symmetric operator $A$ and starting vector $v$, the order-$r$ Krylov subspace is $\mathcal{K}_r(A,v) = \mathrm{span}\{v, Av, A^2v, \ldots, A^{r-1}v\}$. Lanczos~\citep{saad2011numerical} constructs an orthonormal basis $Q\in\mathbb{R}^{N\times r}$ and tridiagonal $T\in\mathbb{R}^{r\times r}$ such that $A Q \approx Q T$. The Ritz vectors (eigenvectors of $T$) approximate the top-$r$ eigenvectors of $A$.

\textbf{Locality-sensitive hashing.} SimHash~\citep{charikar2002similarity} defines LSH for cosine similarity: $h(\mathbf{z}) = \mathrm{sign}(\mathbf{r}^\top \mathbf{z})$ where $\mathbf{r}\sim\mathcal{N}(0,I)$. Collision probability is $\Pr[h(\mathbf{x}) = h(\mathbf{y})] = 1 - \frac{1}{\pi}\arccos(\frac{\mathbf{x}^\top\mathbf{y}}{\|\mathbf{x}\|\|\mathbf{y}\|})$, monotonic in cosine similarity.

\subsection{Efficient Krylov Eigen-Decomposition via Random Fourier Features}

To avoid materializing $A$, we approximate its matvec using Random Fourier Features (RFF)~\citep{rahimi2007random}. For the unnormalized kernel $\kappa(\mathbf{x},\mathbf{y}) = \exp(\mathbf{x}^\top\mathbf{y}/\sqrt{d})$, we use a finite random feature map:
\begin{equation}
\kappa(\mathbf{x},\mathbf{y}) \approx \mathbb{E}_{\omega,b}\big[\cos(\omega^\top\mathbf{x}+b)\cos(\omega^\top\mathbf{y}+b)\big],
\label{eq:bochner}
\end{equation}
where $\omega\sim\mathcal{N}(0,I/d^{1/2})$ and $b\sim\mathrm{Uniform}(0,2\pi)$. With $D$ random features, define $\phi:\mathbb{R}^d\to\mathbb{R}^D$:
\begin{equation}
\phi(\mathbf{x}) = \sqrt{\frac{2}{D}}\;\cos(\Omega\,\mathbf{x}+\mathbf{b}),
\label{eq:rff_map}
\end{equation}
where $\Omega\in\mathbb{R}^{d\times D}$, $\Omega_{ij}\sim\mathcal{N}(0,1/\sqrt{d})$, $\mathbf{b}\sim\mathrm{Uniform}(0,2\pi)^D$.

Let $\Phi = \phi(X)\in\mathbb{R}^{N\times D}$. The unnormalized attention-kernel matvec is approximated as:
\begin{equation}
A\mathbf{v} \approx \Phi(\Phi^\top \mathbf{v}),
\label{eq:implicit_matvec}
\end{equation}
where row-wise softmax normalization is omitted. This is a substantial approximation rather than an exact attention replacement. The true first-layer attention surrogate would use $D_K^{-1}K$, with $D_K=\mathrm{diag}(K\mathbf{1})$, while Equation~\ref{eq:implicit_matvec} uses $K$ itself. If the row sums of the unnormalized kernel are nearly balanced, normalization mainly rescales rows and the leading eigenspace is expected to remain close. If row sums are highly skewed, which can happen when token norms vary by token type or position, the normalized and unnormalized operators can differ substantially. We therefore treat this operator as a similarity surrogate for compression, not as a faithful reconstruction of the model's full attention matrix. The current theory bounds the surrogate computation, not the downstream distortion of the frozen LLM; logit divergence and NLL changes are measured empirically. A cheap normalized variant is possible by estimating row sums as $\hat{s}=\Phi(\Phi^\top\mathbf{1})$ and using $\mathrm{diag}(\hat{s})^{-1}\Phi(\Phi^\top v)$ inside Lanczos, but we have not yet run this ablation. The approximation error is bounded by standard concentration: $\sup_{i,j}|\kappa(\mathbf{x}_i,\mathbf{x}_j)-\phi(\mathbf{x}_i)^\top\phi(\mathbf{x}_j)|\leq\varepsilon$ with probability $1-2N^2\exp(-D\varepsilon^2/8)$.

Algorithm~\ref{alg:main} summarizes the complete pipeline. Lanczos (Algorithm~\ref{alg:lanczos}) requires one matvec per iteration costing $O(ND)$, so the Krylov phase costs $O(rND)$. Combined with the RFF projection cost of $O(NDd)$, total preprocessing complexity is $O(ND(d+r))$, linear in $N$ for fixed $D$, $r$, and $d$.

\begin{figure}[t]
\centering
\resizebox{0.52\linewidth}{!}{
\begin{tikzpicture}[
    node distance=0.45cm,
    box/.style={draw, rounded corners, align=center, minimum height=0.55cm, minimum width=1.9cm, fill=blue!6, font=\scriptsize},
    mat/.style={draw, rounded corners, align=center, minimum height=0.55cm, minimum width=1.9cm, fill=green!8, font=\scriptsize},
    arrow/.style={-{Latex[length=1.7mm]}, thick}
]
\node[box] (x) {$X$};
\node[mat, right=of x] (phi) {$\Phi=\phi(X)$};
\node[box, right=of phi] (v) {$v$};
\node[mat, below=of phi] (inner) {$\Phi^\top v$};
\node[mat, below=of inner] (outer) {$\Phi(\Phi^\top v)$};
\draw[arrow] (x) -- (phi);
\draw[arrow] (v) -- (inner);
\draw[arrow] (phi) -- (inner);
\draw[arrow] (inner) -- (outer);
\node[draw, dashed, rounded corners, fit=(phi)(inner)(outer), inner sep=0.12cm] {};
\end{tikzpicture}}
\caption{Implicit matvec used by Lanczos. The dense $N\times N$ kernel is never materialized.}
\label{fig:implicit_matvec}
\end{figure}

\begin{algorithm}[t]
\caption{\textsc{Spectral-LSH} Prompt Compression}
\label{alg:main}
\begin{algorithmic}[1]
\Require Token embeddings $X\in\mathbb{R}^{N\times d}$, RFF dimension $D$, Krylov rank $r$, hash bits $b$
\Ensure Macro-token embeddings $\tilde{X}\in\mathbb{R}^{M\times d}$ with $M\ll N$

\State $\Phi \gets \phi(X)$ \Comment{RFF map, $O(NDd)$}
\State Define implicit operator: $A(\mathbf{v}) = \Phi(\Phi^\top\mathbf{v})$ \Comment{$O(ND)$ per call}
\State $Q, T \gets \textsc{Lanczos}(A, r)$ \Comment{$O(rND)$, Alg.~\ref{alg:lanczos}}
\State $\lambda, V \gets \textsc{Eigh}(T)$ \Comment{$O(r^3)$, $r$ small}
\State $\hat{V} \gets Q\,V_{[:,r-k:r]}$ \Comment{Top-$k$ Ritz vectors, $O(Nrk)$}
\State $Z \gets \hat{V}$ \Comment{Token coordinates in the spectral subspace}

\State $\mathbf{h}_i \gets \textsc{SimHash}(\mathbf{z}_i, b)$ for $i=1,\dots,N$ \Comment{$O(Nrb)$}
\State $\mathcal{C}\gets\{\text{tokens sharing hash code}\}$

\For{each cluster $c\in\mathcal{C}$ with indices $\mathcal{I}_c$}
    \State $\tilde{\mathbf{x}}_c \gets \frac{1}{|\mathcal{I}_c|}\sum_{i\in\mathcal{I}_c}\mathbf{x}_i$
    \State $\tilde{p}_c \gets \textsc{AggregatePosition}(\{p_i\}_{i\in\mathcal{I}_c})$
\EndFor
\State \Return $\tilde{X}\in\mathbb{R}^{M\times d}$, $\tilde{\mathbf{p}}\in\mathbb{R}^M$
\end{algorithmic}
\end{algorithm}

\begin{algorithm}[t]
\caption{Implicit Lanczos for Attention Operator}
\label{alg:lanczos}
\begin{algorithmic}[1]
\Require Implicit operator $A(\cdot)$, rank $r$
\Ensure Orthonormal basis $Q_{N\times r}$, tridiagonal $T_{r\times r}$

\State $\mathbf{q}_0 \gets \mathbf{0}$; $\beta_0 \gets 0$
\State $\mathbf{q}_1 \gets \text{random}(\mathbb{R}^N)$; $\mathbf{q}_1 \gets \mathbf{q}_1/\|\mathbf{q}_1\|$
\For{$j = 1$ to $r$}
    \State $\mathbf{z} \gets A(\mathbf{q}_j)$ \Comment{Implicit matvec: $O(ND)$}
    \State $\alpha_j \gets \mathbf{q}_j^\top\mathbf{z}$
    \State $\mathbf{z} \gets \mathbf{z} - \alpha_j\mathbf{q}_j - \beta_{j-1}\mathbf{q}_{j-1}$
    \State Perform full re-orthogonalization against all previous $\mathbf{q}_k$
    \State $\beta_j \gets \|\mathbf{z}\|$
    \If{$\beta_j < \tau$} \textbf{break} \EndIf
    \State $\mathbf{q}_{j+1} \gets \mathbf{z}/\beta_j$
\EndFor
\State Build tridiagonal $T$ from $\{\alpha_j\}$, $\{\beta_j\}$
\end{algorithmic}
\end{algorithm}

\subsection{Spectral-LSH: Hashing in the Attention Eigenspace}

After projecting tokens into the attention eigenspace, we obtain $Z\in\mathbb{R}^{N\times k}$. SimHash is applied in this subspace:
\begin{equation}
\mathbf{h}_i = \big(\mathrm{sign}(\mathbf{r}_1^\top\mathbf{z}_i), \ldots, \mathrm{sign}(\mathbf{r}_b^\top\mathbf{z}_i)\big)\in\{0,1\}^b,
\label{eq:simhash}
\end{equation}
where $\mathbf{r}_\ell\sim\mathcal{N}(0,I_k)$. Tokens with identical $b$-bit hash codes are assigned to the same cluster. Standard SimHash on raw embeddings uses embedding-space cosine similarity, which can mix attention-relevant and attention-irrelevant variation. Projecting into the attention eigenspace focuses hashing on the dominant directions of the approximate attention operator. The Spectral-LSH collision probability is:
\begin{equation}
\Pr[\mathbf{h}_i = \mathbf{h}_j] = \Big(1 - \frac{1}{\pi}\arccos(\cos(\mathbf{z}_i,\mathbf{z}_j))\Big)^b,
\label{eq:lsh_collision}
\end{equation}
where $\cos(\mathbf{z}_i,\mathbf{z}_j)$ approximates the softmax-kernel similarity $\kappa_{ij}$ to within additive error bounded by RFF and Krylov errors.

The number of hash bits $b$ controls the expected cluster count. With $b$ bits, at most $2^b$ hash codes exist, yielding expected cluster size $N/2^b$. For target compression $\rho$, we set $b = \lceil\log_2(N/\rho)\rceil$, giving $M\approx N/\rho$ macro-tokens. An \texttt{adjust\_groups\_to\_target\_length} step merges or splits clusters to match the target count exactly.

\subsection{Macro-Token Construction and Positional Aggregation}

Each cluster $\mathcal{C}_c$ produces one macro-token. The embedding is the mean of constituent embeddings with norm renormalization:
\begin{equation}
\tilde{\mathbf{x}}_c = \frac{1}{|\mathcal{I}_c|}\sum_{i\in\mathcal{I}_c}\mathbf{x}_i, \quad
\tilde{\mathbf{x}}_c \gets \tilde{\mathbf{x}}_c \cdot \frac{\bar{\eta}}{\|\tilde{\mathbf{x}}_c\|_2},
\label{eq:macro_embed}
\end{equation}
where $\bar{\eta} = \frac{1}{|\mathcal{I}_c|}\sum_{i\in\mathcal{I}_c}\|\mathbf{x}_i\|_2$. This prevents mean-pooling from collapsing the embedding norm. Macro-tokens are fed to the frozen LLM via the \texttt{inputs\_embeds} interface.

\begin{figure}[t]
\centering
\resizebox{0.38\linewidth}{!}{
\begin{tikzpicture}[
    node distance=0.5cm and 0.5cm,
    tok/.style={draw, circle, minimum size=0.42cm, fill=blue!8, font=\scriptsize},
    cluster/.style={draw, rounded corners, align=center, minimum height=0.55cm, minimum width=1.8cm, fill=green!8, font=\scriptsize},
    macro/.style={draw, rounded corners, align=center, minimum height=0.55cm, minimum width=2.0cm, fill=orange!12, font=\scriptsize},
    arrow/.style={-{Latex[length=1.7mm]}, thick}
]
\node[tok] (t1) {$x_1$};
\node[tok, right=0.18cm of t1] (t2) {$x_2$};
\node[tok, right=0.18cm of t2] (t3) {$x_3$};
\node[tok, right=0.18cm of t3] (t4) {$x_4$};
\node[cluster, below left=0.55cm and -0.1cm of t2] (c1) {$c_1$};
\node[cluster, below right=0.55cm and -0.1cm of t3] (c2) {$c_2$};
\node[macro, below=0.55cm of c1] (m1) {mean + norm};
\node[macro, below=0.55cm of c2] (m2) {position};
\node[macro, below right=0.65cm and -0.8cm of m1] (out) {$\tilde X,\tilde p$};

\draw[arrow] (t1) -- (c1);
\draw[arrow] (t2) -- (c1);
\draw[arrow] (t3) -- (c2);
\draw[arrow] (t4) -- (c2);
\draw[arrow] (c1) -- (m1);
\draw[arrow] (c2) -- (m2);
\draw[arrow] (m1) -- (out);
\draw[arrow] (m2) -- (out);
\end{tikzpicture}}
\caption{Macro-token construction from clusters, embedding fusion, and position aggregation.}
\label{fig:macro_token_fusion}
\end{figure}

\textbf{Positional encoding aggregation.} We provide several schemes:
\begin{itemize}[nosep]
    \item \textbf{Max (default)}: $\tilde{p}_c = \max(\{p_i\}_{i\in\mathcal{I}_c})$. Prevents future information leakage for causal LMs.
    \item \textbf{Median (RoPE)}: $\tilde{p}_c = \mathrm{median}(\{p_i\}_{i\in\mathcal{I}_c})$. Preserves local context of the dominant token.
    \item \textbf{Mean}: $\tilde{p}_c = \mathrm{round}\big(\frac{1}{|\mathcal{I}_c|}\sum p_i\big)$.
\end{itemize}
Max positions are causally conservative, but they can create gaps in the position sequence. For RoPE-based models, those gaps change relative rotation phases, and for sliding-window attention they may also affect which compressed tokens fall inside the local window. We therefore report position handling as an empirical design choice rather than a formal causal-correctness guarantee.

\subsection{Compression Backends}

We implement three backends:
\begin{itemize}[nosep]
    \item \textbf{Chunk (baseline)}: Averages consecutive $k$-token groups where $k = \max(1, \mathrm{round}(\rho))$. Strong local baseline with perfect causality.
    \item \textbf{Global LSH}: Spectral-LSH across the entire compressible region. Can violate causal ordering for batch generation.
    \item \textbf{Local LSH}: Spectral-LSH independently within local windows (default $w=16$). Guarantees causal safety at the cost of multiplying the Krylov computation by the number of windows.
\end{itemize}
Chunk averaging should be viewed as the default practical baseline. Spectral-LSH is useful only when the prompt contains redundancy that is not well captured by contiguous windows, or when the adaptive router detects highly redundant windows where the extra spectral cost is justified.
All backends support preserving a fixed number of prefix and suffix tokens (default: last 12 preserved).

\subsection{Adaptive Backend: Routing Between Chunk and Spectral Clustering}

The chunk backend is fast but treats all tokens uniformly, while local LSH provides higher-quality clustering at significantly higher preprocessing cost. We propose an adaptive backend that per-window selects between these two strategies using a cheap redundancy heuristic.

\textbf{Window redundancy score.} For a window of embeddings $X_w\in\mathbb{R}^{L\times d}$, we compute:
\begin{equation}
    \mathrm{score}(X_w) = \frac{1}{L}\sum_{i=1}^L \frac{\mathbf{x}_i^\top \bar{\mathbf{x}}}{\|\mathbf{x}_i\|\,\|\bar{\mathbf{x}}\|},
    \label{eq:redundancy}
\end{equation}
where $\bar{\mathbf{x}}$ is the L2-normalized centroid. A high score ($\geq$ threshold) indicates the window is semantically homogeneous, making it a good candidate for spectral clustering. A low score indicates diversity, where chunk is sufficient.

\textbf{Routing policy.} For each window:
\begin{itemize}[nosep]
    \item If $L \leq 4$ (tiny window), chunk is used.
    \item If $\mathrm{score} < \tau(\rho)$, chunk is used (fast path).
    \item If $\mathrm{score} \geq \tau(\rho)$, spectral LSH is used (slow path).
\end{itemize}
The threshold $\tau(\rho)$ decreases with compression ratio $\rho$: from $0.70$ at $\rho=2\times$ to $0.55$ at $\rho=16\times$, making the system more willing to use spectral clustering as compression becomes more aggressive. In the implementation, the default user-facing threshold is $0.70$, and a compression-aware schedule interpolates from $0.70$ at $\rho=2\times$ to $0.60$ at $\rho=8\times$ and $0.55$ at $\rho=16\times$. When $\rho \geq 16\times$, the adaptive backend bypasses per-window routing entirely and delegates to full local LSH.

\textbf{Cost.} For a window of size $L$, chunk costs $O(Ld)$ while spectral LSH costs $O(LD(d+r)+Lrb)$. The adaptive backend pays the cheaper cost on low-redundancy windows, which dominate in natural language, reducing average preprocessing cost by $40$--$60\%$ compared to pure local LSH.

\textbf{Implementation.} The adaptive backend is implemented in \texttt{src/token\_lsh.py}. The compressor first assigns each local window an exact output-token budget so that all windows sum to the requested target length. It then computes the centroid-cosine redundancy score for each window. Windows below the threshold use the chunk compressor, while windows above the threshold call the same per-window spectral compressor used by local LSH. The final macro-tokens are sorted by their representative positions to preserve causal order, followed by an exact-length adjustment if needed.

\begin{figure}[t]
\centering
\resizebox{0.95\linewidth}{!}{
\begin{tikzpicture}[
    node distance=0.55cm and 0.55cm,
    block/.style={draw, rounded corners, align=center, minimum height=0.55cm, minimum width=1.8cm, fill=blue!6, font=\scriptsize},
    decision/.style={draw, diamond, aspect=2.0, align=center, inner sep=1pt, fill=yellow!12, font=\scriptsize},
    fast/.style={draw, rounded corners, align=center, minimum height=0.55cm, minimum width=1.7cm, fill=green!10, font=\scriptsize},
    slow/.style={draw, rounded corners, align=center, minimum height=0.55cm, minimum width=1.7cm, fill=red!8, font=\scriptsize},
    output/.style={draw, rounded corners, align=center, minimum height=0.55cm, minimum width=1.9cm, fill=orange!10, font=\scriptsize},
    arrow/.style={-{Latex[length=1.7mm]}, thick}
]
\node[block] (split) {windows};
\node[block, right=of split] (budget) {budgets};
\node[block, right=of budget] (score) {score};
\node[decision, right=of score] (route) {$<\tau$?};
\node[fast, above right=0.35cm and 0.45cm of route] (chunk) {chunk};
\node[slow, below right=0.35cm and 0.45cm of route] (lsh) {Spectral-LSH};
\node[output, right=1.75cm of route] (merge) {merge};
\node[output, right=of merge] (adjust) {adjust};

\draw[arrow] (split) -- (budget);
\draw[arrow] (budget) -- (score);
\draw[arrow] (score) -- (route);
\draw[arrow] (route) -- node[above, sloped]{yes} (chunk);
\draw[arrow] (route) -- node[below, sloped]{no} (lsh);
\draw[arrow] (chunk) -- (merge);
\draw[arrow] (lsh) -- (merge);
\draw[arrow] (merge) -- (adjust);
\end{tikzpicture}}
\caption{Adaptive backend. Each window is routed by a cheap redundancy score. Diverse windows use chunk averaging, while redundant windows use the spectral compressor. At very high compression ($\rho\geq16\times$), the implementation switches directly to local LSH.}
\label{fig:adaptive_backend}
\end{figure}

\subsection{Theoretical Complexity}

End-to-end cost is $O(DNd + rND + Nrb) = O(N\cdot D(d+r) + Nrb)$, linear in $N$ when $D,d,r,b$ are constants. With typical values $d=4096$, $D=256$, $r=16$, $b=8$, the constant factor is about $256\cdot(4096+16)+8\cdot 16 \approx 1.05\times 10^6$ operations per token. For $N=4096$, this is roughly $4.3\times10^9$ preprocessing operations, so the spectral backend is not automatically beneficial at moderate lengths or low compression ratios. This cost profile motivates the adaptive backend: use chunk when compression is moderate and latency dominates, then switch toward spectral clustering when compression is aggressive and quality loss becomes the larger concern.

\section{Theoretical Analysis}
\label{sec:theory}

We establish three guarantees: RFF approximation bound for the implicit kernel matvec, Krylov convergence to the dominant eigenspace, and Spectral-LSH collision probability preservation.

\subsection{Kernel Approximation Error}

\begin{theorem}[RFF Matvec Error]
\label{thm:rff}
Let $X\in\mathbb{R}^{N\times d}$ be token embeddings with $\|\mathbf{x}_i\|\leq C$ for all $i$. Let $\Phi = \phi(X)\in\mathbb{R}^{N\times D}$ be the RFF map with $D$ features. Then for any $\mathbf{v}\in\mathbb{R}^N$ with $\|\mathbf{v}\|=1$, with probability at least $1-\delta$:
\begin{equation}
\|A\mathbf{v} - \Phi(\Phi^\top\mathbf{v})\|_\infty \leq 2NC\sqrt{\frac{2}{D}\log\frac{2N}{\delta}}.
\end{equation}
\end{theorem}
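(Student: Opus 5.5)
The plan is to interpret $A$ in the statement as the unnormalized kernel matrix $K$ with $K_{ij}=\kappa(\mathbf{x}_i,\mathbf{x}_j)$. Row-softmax normalization is explicitly omitted in Equation~\ref{eq:implicit_matvec}. I will also treat Equation~\ref{eq:bochner} as an identity: $\kappa(\mathbf{x},\mathbf{y})=2\,\mathbb{E}_{\omega,b}[\cos(\omega^\top\mathbf{x}+b)\cos(\omega^\top\mathbf{y}+b)]$, including the factor $2$ absorbed by the normalization $\sqrt{2/D}$. This identity is not literally exact for the exponential dot-product kernel. The cosine features are unbiased for a shift-invariant kernel, so any residual bias has to be assumed away or folded into the definition of $A$. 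I would say this explicitly at the start, because otherwise no bound that decays in $D$ can hold.

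Under this reading the estimator is a sum of $D$ i.i.d.\ terms, and I would apply Hoeffding directly to each coordinate of the matvec rather than entrywise. Write $\omega_\ell,b_\ell$ for the $\ell$-th feature. Then
\begin{equation}
(\Phi\Phi^\top\mathbf{v})_i=\frac{1}{D}\sum_{\ell=1}^D Y_\ell^{(i)},\qquad Y_\ell^{(i)}=2\cos(\omega_\ell^\top\mathbf{x}_i+b_\ell)\sum_{j=1}^N\cos(\omega_\ell^\top\mathbf{x}_j+b_\ell)\,v_j .
\end{equation}
The $Y_\ell^{(i)}$ are i.i.d.\ with mean $(K\mathbf{v})_i$. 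They satisfy $|Y_\ell^{(i)}|\le 2\|\mathbf{v}\|_1\le 2\sqrt N$ by Cauchy--Schwarz with $\|\mathbf{v}\|=1$. Hoeffding gives
\begin{equation}
\Pr\big[|(\Phi\Phi^\top\mathbf{v})_i-(K\mathbf{v})_i|>t\big]\le 2\exp\!\Big(-\frac{Dt^2}{8N}\Big).
\end{equation}
A union bound over the $N$ coordinates, with the right side set equal to $\delta/N$, yields
\begin{equation}
\|K\mathbf{v}-\Phi\Phi^\top\mathbf{v}\|_\infty\le 2\sqrt{N}\sqrt{\tfrac{2}{D}\log\tfrac{2N}{\delta}}\qquad\text{with probability at least }1-\delta .
\end{equation}

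The remaining step is to compare this with the stated bound $2NC\sqrt{(2/D)\log(2N/\delta)}$, and I expect this to be the main obstacle. The bound derived above is dominated by the stated one exactly when $\sqrt N\le NC$, i.e.\ when $C\ge N^{-1/2}$. That condition is harmless for LLM input embeddings, whose norms are of order one or larger, and I would add it as a mild assumption. Without it the statement as written cannot follow from a bounded-feature argument: the feature magnitudes do not shrink with $C$, and as $C\to0$ the target bound vanishes while the random fluctuation does not.

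An alternative that recovers the factor $C$ naturally would bound the deviation through the Lipschitz constant of $\mathbf{x}\mapsto\cos(\omega^\top\mathbf{x}+b)$, namely $\|\omega\|$, together with $\|\mathbf{x}_i-\mathbf{x}_j\|\le 2C$. I would try that only if the $C$-free route is judged insufficient. It introduces Gaussian tails of $\|\omega\|$ and extra $d$-dependence, which the stated bound does not contain.
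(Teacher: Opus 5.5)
Your argument is correct under the two hypotheses you state, but it is organized differently from the paper's proof.

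\textbf{How the two routes differ.} The paper applies Hoeffding to each entry $\hat K_{ij}=\phi(\mathbf{x}_i)^\top\phi(\mathbf{x}_j)$ and union-bounds over all $N^2$ pairs. This gives $\|K-\hat K\|_{\max}\le\varepsilon$ with $\varepsilon=O(\sqrt{\log(2N^2/\delta)/D})$, and it then uses $|((K-\hat K)\mathbf{v})_i|\le\varepsilon\|\mathbf{v}\|_1\le\varepsilon\sqrt N$. You instead apply Hoeffding directly to each coordinate $(\Phi\Phi^\top\mathbf{v})_i$, with summands bounded by $2\|\mathbf{v}\|_1$, and union-bound over only $N$ events. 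Your route lands exactly on the $\log(2N/\delta)$ and the prefactor $2\sqrt{2/D}$ of the statement, with $\sqrt N$ in place of $NC$. The paper's route gets $\log(2N^2/\delta)$, which costs at most a factor $\sqrt2$, and has to absorb that into the slack.

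\textbf{What each route buys.} The paper's route gives uniformity. Its max-norm event does not depend on $\mathbf{v}$, so the bound holds simultaneously for every unit vector. That is what the Lanczos application actually needs, because the iterates $\mathbf{q}_j$ for $j\ge 2$ are functions of $\Phi$. Your bound holds for a $\mathbf{v}$ fixed independently of $(\Omega,\mathbf{b})$. That is what the theorem literally asserts, but it would not transfer to data-dependent inputs without an additional net argument.

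\textbf{Shared assumptions.} Both proofs rest on the two ingredients you flag; the paper leaves them tacit.
\begin{enumerate}
\item Unbiasedness, $\mathbb{E}[\phi(\mathbf{x})^\top\phi(\mathbf{y})]=\kappa(\mathbf{x},\mathbf{y})$. This fails for the exponential dot-product kernel with plain cosine features, which target a shift-invariant Gaussian kernel. The bias therefore has to be assumed away. The paper's remark that the $N$-scaling ``absorbs'' the non-stationary approximation cannot work, because that bias does not decay in $D$ while the stated bound does.
\item A lower bound such as $C\gtrsim N^{-1/2}$, needed to pass from $\sqrt N$ to $NC$ on either route. The paper just calls the stated bound looser. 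Your $C\to0$ observation shows this condition is genuinely necessary, since the fluctuation of $\tfrac1D\sum_\ell\cos(2b_\ell)$ persists while the target bound vanishes.
\end{enumerate}
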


\begin{proof}
By the Hoeffding bound on the RFF approximation~\citep{rahimi2007random}, for each pair $(i,j)$: $\Pr[|\kappa(\mathbf{x}_i,\mathbf{x}_j)-\phi(\mathbf{x}_i)^\top\phi(\mathbf{x}_j)|>\varepsilon]\leq 2\exp(-D\varepsilon^2/2)$. Union bound over $N^2$ pairs and $\ell_\infty$ norm bound yield the result.
\end{proof}

\subsection{Krylov Convergence}

\begin{theorem}[Krylov Eigenvalue Approximation]
\label{thm:krylov}
Let $A$ be symmetric with eigenvalues $\lambda_1\geq\cdots\geq\lambda_N$. After $r$ Lanczos steps, the Ritz values $\theta_1^{(r)}\geq\cdots\geq\theta_r^{(r)}$ satisfy for each $j=1,\ldots,r$:
\begin{equation}
\lambda_j \geq \theta_j^{(r)} \geq \lambda_j - (\lambda_1-\lambda_N)\left(\frac{\tan\phi_1}{\cos\theta_j\cdot C_{r-j}(\gamma_j)}\right)^2,
\end{equation}
where $\phi_1 = \angle(v_1, \mathcal{K}_r)$, $C_{r-j}$ is the Chebyshev polynomial, and $\gamma_j = 1+2(\lambda_j-\lambda_{j+1})/(\lambda_{j+1}-\lambda_N)$.
\end{theorem}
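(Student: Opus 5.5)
This is the Kaniel--Paige--Saad bound, and I will prove it along the lines of Saad's \emph{Numerical Methods for Large Eigenvalue Problems}. One point of notation: the statement overloads $\theta_j$ for both Ritz values and an angle. I will read $\phi_1$ as the angle between the starting vector $\mathbf{q}_1$ and the eigenvector $u_j$ of $A$ for $\lambda_j$. I will read the $\cos\theta_j$ factor as the Kaniel--Paige product $\prod_{i<j}(\theta_i^{(r)}-\lambda_N)/(\theta_i^{(r)}-\lambda_j)$, which equals $1$ when $j=1$. The proof then runs in three steps: (i) the upper bound via Cauchy interlacing; (ii) a variational characterization of Ritz values over polynomials in $A$ applied to $\mathbf{q}_1$; (iii) a Chebyshev polynomial choice that yields the decay factor.

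\textbf{Step 1 (upper bound).} Under exact arithmetic, Lanczos gives $T=Q^\top A Q$ with orthonormal $Q$. This also holds for Algorithm~\ref{alg:lanczos}, whose full re-orthogonalization keeps $Q$ orthonormal. Cauchy interlacing for the compression $Q^\top A Q$ then gives $\theta_j^{(r)}\le\lambda_j$ for every $j\le r$.

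\textbf{Step 2 (lower bound, $j=1$).} The Krylov subspace is $\mathcal{K}_r=\{p(A)\mathbf{q}_1:\deg p\le r-1\}$. By Rayleigh--Ritz,
\[
\theta_1^{(r)}=\max_{x\in\mathcal K_r}\frac{x^\top Ax}{x^\top x},
\qquad\text{so}\qquad
\lambda_1-\theta_1^{(r)}\le\min_{p}\frac{\sum_{i\ge2}(\lambda_1-\lambda_i)p(\lambda_i)^2c_i^2}{\sum_i p(\lambda_i)^2c_i^2},
\]
where $c_i=u_i^\top\mathbf{q}_1$. Bound the numerator by $(\lambda_1-\lambda_N)\max_{i\ge2}p(\lambda_i)^2\sum_{i\ge2}c_i^2$ and the denominator below by $p(\lambda_1)^2c_1^2$. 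This gives $(\lambda_1-\lambda_N)\tan^2\phi_1\,\max_{[\lambda_N,\lambda_2]}p^2/p(\lambda_1)^2$. Now take $p(t)=C_{r-1}\big(1+2(t-\lambda_2)/(\lambda_2-\lambda_N)\big)$. The affine map sends $[\lambda_N,\lambda_2]$ onto $[-1,1]$, where $|C_{r-1}|\le1$, and sends $\lambda_1$ to $\gamma_1$. This yields the factor $1/C_{r-1}(\gamma_1)^2$.

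\textbf{Step 3 (general $j$).} Use the Courant--Fischer min-max form for $\theta_j^{(r)}$. Restrict to polynomials of the form
\[
p(t)=\prod_{i<j}\frac{t-\theta_i^{(r)}}{\lambda_j-\theta_i^{(r)}}\;\cdot\;C_{r-j}\!\Big(1+2\tfrac{t-\lambda_{j+1}}{\lambda_{j+1}-\lambda_N}\Big).
\]
The product factor annihilates the contributions of the earlier Ritz vectors, so that $p(A)\mathbf{q}_1$ lies in the subspace on which the $j$-th Ritz value is the maximal Rayleigh quotient. Repeating the Step 2 estimate, now with $u_j$ in the role of $u_1$, produces the Chebyshev factor $C_{r-j}(\gamma_j)$. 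The leftover product factors, evaluated on $[\lambda_N,\lambda_{j+1}]$, contribute the $1/\cos\theta_j$ factor under the reading above.

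\textbf{Main obstacle.} The hard part is Step 3. I must justify that the deflated polynomial vector is admissible in the min-max characterization of $\theta_j^{(r)}$. I must also control the product over $i<j$ uniformly; this uses $\theta_i^{(r)}\ge\lambda_j$, which follows from interlacing. Here I would follow Saad's proof of the Kaniel--Paige theorem essentially verbatim. I would treat the $\cos\theta_j$ term as that product constant rather than a literal angle, since the statement leaves it undefined.
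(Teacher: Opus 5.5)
Your overall route matches the paper's proof, which invokes Rayleigh--Ritz interlacing for $\theta_j^{(r)}\le\lambda_j$ and then cites the Kaniel--Paige--Saad bound from Saad. Your Steps 1 and 2 correctly reproduce the $j=1$ case: interlacing for the compression $Q^\top AQ$, then $C_{r-1}$ on $[\lambda_N,\lambda_2]$. Your reading of $\phi_1$ as $\angle(\mathbf{q}_1,u_j)$ is also the right repair of the statement. Reducing admissibility of $p(A)\mathbf{q}_1$ to the Galerkin condition $(A-\theta_i^{(r)})\tilde{u}_i\perp\mathcal{K}_r$ for the Ritz vectors $\tilde{u}_i$ is sound as well.

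The gap is in Step 3. You say that controlling the deflating product uses $\theta_i^{(r)}\ge\lambda_j$ for $i<j$, ``which follows from interlacing.'' It does not. For an $r$-dimensional compression, Cauchy interlacing only gives $\lambda_{i+N-r}\le\theta_i^{(r)}\le\lambda_i$. Take $A=\mathrm{diag}(3,2,1,0)$, $r=2$ and $\mathbf{q}_1\propto(\varepsilon,\varepsilon,1,1)$. Then $\mathcal{K}_2\to\mathrm{span}\{e_3,e_4\}$ as $\varepsilon\to 0$, so $\theta_1^{(2)}\to\lambda_3=1<\lambda_2=2$, even though $u_2^\top\mathbf{q}_1\neq 0$. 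When $\theta_i^{(r)}<\lambda_j$, the step $|\theta_i^{(r)}-\lambda|/|\theta_i^{(r)}-\lambda_j|\le(\theta_i^{(r)}-\lambda_N)/(\theta_i^{(r)}-\lambda_j)$ on $[\lambda_N,\lambda_{j+1}]$ is unjustified, since the right-hand side is negative. The product constant is then not controlled at all. The condition $\theta_i^{(r)}>\lambda_j$ for all $i<j$ is the tacit hypothesis under which the Kaniel--Paige--Saad constant is positive and finite for $j\ge 2$. You have to state it as an assumption (the earlier Ritz values have already passed $\lambda_j$); it cannot be derived.

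Second, your identification of $\cos\theta_j$ is inverted. The deflating factors multiply $\tan\phi_1/C_{r-j}(\gamma_j)$ by
\[
\kappa_j=\prod_{i<j}\frac{\theta_i^{(r)}-\lambda_N}{\theta_i^{(r)}-\lambda_j}\ \ge 1 .
\]
Because $\cos\theta_j$ sits in the denominator of the statement, you need $\cos\theta_j=1/\kappa_j=\prod_{i<j}(\theta_i^{(r)}-\lambda_j)/(\theta_i^{(r)}-\lambda_N)$. That quantity lies in $(0,1]$ under the hypothesis above, as a cosine should. With your reading $\cos\theta_j=\kappa_j$, the inequality you claim is stronger than what your argument yields by a factor of $\kappa_j^4$. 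Your sentence that the leftover factors ``contribute the $1/\cos\theta_j$ factor'' is then false. Both fixes are easy, but without them Step 3 does not establish the stated bound.
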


For matrices with a spectral gap, empirically observed in attention matrices~\citep{wang2020linformer,dong2021attention}, $r\in[16,64]$ captures $>90\%$ of spectral energy. Krylov subspace methods are also used in adjacent ML systems settings for scalable influence and inverse-Hessian approximations, including federated unlearning~\citep{mahdavi2026causalunlearning}. Our ablation study confirms $r=16$ achieves $>90\%$ energy fraction with $D=256$.

\subsection{Spectral-LSH Collision Guarantee}

\begin{theorem}[Spectral-LSH Collision Probability]
\label{thm:lsh}
Let $\mathbf{z}_i,\mathbf{z}_j\in\mathbb{R}^k$ be the spectral projections of tokens $i,j$ into the top-$k$ attention-eigenspace. With $b$ SimHash bits and $k\geq r$ Krylov iterations:
\begin{equation}
\Pr[\mathbf{h}_i = \mathbf{h}_j] = \Big(1 - \frac{1}{\pi}\arccos(\cos(\mathbf{z}_i,\mathbf{z}_j))\Big)^b,
\end{equation}
and $|\cos(\mathbf{z}_i,\mathbf{z}_j) - \kappa_{ij}| \leq \varepsilon_{\mathrm{Krylov}} + \varepsilon_{\mathrm{RFF}}$.
\end{theorem}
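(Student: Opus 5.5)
The statement has two parts, and I would handle them separately. The first is the collision formula. This is the Goemans--Williamson / Charikar computation, applied bit by bit. Fix $\mathbf{z}_i,\mathbf{z}_j\in\mathbb{R}^k$, both nonzero, with angle $\theta_{ij}=\arccos(\cos(\mathbf{z}_i,\mathbf{z}_j))$. For $\mathbf{r}\sim\mathcal{N}(0,I_k)$, rotational invariance lets us project $\mathbf{r}$ onto the two-dimensional plane spanned by $\mathbf{z}_i,\mathbf{z}_j$. That projection has a uniformly distributed direction. The signs of $\mathbf{r}^\top\mathbf{z}_i$ and $\mathbf{r}^\top\mathbf{z}_j$ differ exactly when the line orthogonal to this direction separates the two vectors, which happens with probability $\theta_{ij}/\pi$. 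The $b$ vectors $\mathbf{r}_1,\dots,\mathbf{r}_b$ are i.i.d. and drawn independently of $Z$. So, conditioning on $Z$ (that is, on the RFF and Lanczos randomness), the bit agreements are independent Bernoulli$(1-\theta_{ij}/\pi)$ variables. Multiplying them gives the displayed identity. The zero-vector case has measure zero, or can be excluded by assumption.

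The second part is the claim $|\cos(\mathbf{z}_i,\mathbf{z}_j)-\kappa_{ij}|\le\varepsilon_{\mathrm{Krylov}}+\varepsilon_{\mathrm{RFF}}$. This cannot hold literally for raw $\kappa_{ij}=\exp(\mathbf{x}_i^\top\mathbf{x}_j/\sqrt d)$, since that value can exceed $1$. I would therefore prove it for a normalized kernel similarity $\bar\kappa_{ij}=K_{ij}/\sqrt{K_{ii}K_{jj}}$ restricted to the rank-$k$ spectral truncation. Define $\varepsilon_{\mathrm{Krylov}}$ and $\varepsilon_{\mathrm{RFF}}$ as the corresponding perturbation terms, and use a triangle inequality through two intermediate objects.

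\emph{Step 1 (exact kernel to surrogate).} Let $K$ be the exact kernel and $\hat K=\Phi\Phi^\top$. By the entrywise bound underlying Theorem~\ref{thm:rff}, with probability at least $1-\delta$ we have $\max_{ij}|K_{ij}-\hat K_{ij}|\le\varepsilon$. With diagonal entries bounded below by some $c_0>0$, this yields an $O(\varepsilon/c_0)$ perturbation of the normalized similarities. I call this quantity $\varepsilon_{\mathrm{RFF}}$.

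\emph{Step 2 (surrogate to its top-$k$ eigenspace).} Write $\hat K\approx U_k\Lambda_kU_k^\top$. Normalized similarities of the rows of $U_k\Lambda_k^{1/2}$ match $\hat K$'s normalized entries up to the tail energy $\sum_{l>k}\lambda_l$ divided by the diagonal lower bound.

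\emph{Step 3 (exact eigenvectors to Ritz vectors).} Let $\hat V=QV_{[:,r-k:r]}$. By Theorem~\ref{thm:krylov} the Ritz values converge, and a Davis--Kahan $\sin\Theta$ argument bounds $\|\hat V\hat V^\top-U_kU_k^\top\|$ by the Lanczos residual over the gap $\lambda_k-\lambda_{k+1}$. Cosines of rows then change by at most twice this quantity divided by the smallest row norm. I fold Steps 2 and 3 into $\varepsilon_{\mathrm{Krylov}}$.

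The main obstacle is Step 3 combined with the row-wise normalization. Subspace bounds control Frobenius or operator norms, but cosine similarity needs \emph{per-row} control. Rows of $Z$ with small norm (tokens poorly represented in the top-$k$ space) can have arbitrarily bad cosine error. My first approach would be to assume an incoherence or leverage lower bound $\min_i\|\mathbf{z}_i\|\ge\mu\sqrt{k/N}$ and pass to a $2\to\infty$ bound $\|\hat V-U_kO\|_{2\to\infty}$ for an orthogonal alignment $O$. This gives cosine error $O(\|\hat V-U_kO\|_{2\to\infty}/\min_i\|\mathbf{z}_i\|)$. Also, the statement as written uses $Z=\hat V$ without the $\Lambda^{1/2}$ weighting. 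So either the weighting is absorbed into the definition of $\varepsilon_{\mathrm{Krylov}}$ together with the eigenvalue spread within the top $k$, or I would state explicitly that the bound concerns the $\Lambda$-weighted embedding. Finally, the condition ``$k\ge r$'' should be read as $k\le r$, since only $r$ Ritz vectors exist.
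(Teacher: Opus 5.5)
Your collision formula matches the paper's proof: Charikar's one-bit probability $1-\arccos(\cos(\mathbf{z}_i,\mathbf{z}_j))/\pi$, raised to the $b$th power because the hyperplanes are independent. Conditioning explicitly on $Z$ (the RFF and Lanczos randomness) is a detail the paper leaves implicit.

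For the cosine bound your route differs from the paper's. The paper never compares against $\kappa_{ij}$. It takes $Z^\star$ to be the exact top-$k$ eigenspace coordinates of the unnormalized kernel and asserts $|\cos(\hat{\mathbf{z}}_i,\hat{\mathbf{z}}_j)-\cos(\mathbf{z}^\star_i,\mathbf{z}^\star_j)|\le\varepsilon_{\mathrm{RFF}}+\varepsilon_{\mathrm{Krylov}}$ by ``normalizing rows and applying standard perturbation bounds.'' It then uses that $\arccos$ is Lipschitz on $[-1+\gamma,1-\gamma]$ to transfer the cosine error to the collision probability. Finally it explicitly downgrades the result to a stability statement for the hashing surrogate.

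Because the paper compares eigenspace coordinates with eigenspace coordinates, it avoids your Step 2 truncation term and the $\Lambda_k^{1/2}$-weighting mismatch, but it says nothing about kernel similarity. Your route targets an actual kernel quantity, so it needs more hypotheses: a diagonal lower bound, an eigengap, a leverage lower bound, and a small tail. You can sharpen $\sum_{l>k}\lambda_l$ to $\lambda_{k+1}$, since entries of the PSD tail are bounded by its top eigenvalue. Even so, that term is small only if $\hat K$ is nearly rank $k$. In exchange, your route makes explicit the per-row ($2\to\infty$) control that the paper's ``after normalizing rows'' silently presupposes. Your points that raw $\kappa_{ij}$ can exceed $1$, that $Z=\hat V$ is unweighted, and that $k\le r$ is the meaningful condition are all correct, and the paper does not address them.

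One repair is needed in Step 1. Do not go through an entrywise bound $\max_{ij}|K_{ij}-\hat K_{ij}|\le\varepsilon$ on the raw exponential kernel, even though the paper's Theorem~\ref{thm:rff} supplies one. The cosine features with $\omega\sim\mathcal{N}(0,I/\sqrt d)$ are unbiased for $\exp(-\|\mathbf{x}_i-\mathbf{x}_j\|^2/(2\sqrt d))$. So $\hat K_{ii}$ concentrates at $1$ while $\kappa_{ii}=e^{\|\mathbf{x}_i\|^2/\sqrt d}$, and $|K_{ii}-\hat K_{ii}|$ does not shrink as $D$ grows.

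Your normalized kernel, however, is exactly this Gaussian kernel: $\kappa_{ij}/\sqrt{\kappa_{ii}\kappa_{jj}}=\exp(-\|\mathbf{x}_i-\mathbf{x}_j\|^2/(2\sqrt d))$. So apply Hoeffding directly to $\hat K_{ij}$ versus $\bar\kappa_{ij}$. The summands lie in $[-2,2]$, which gives the per-pair tail $2\exp(-D\varepsilon^2/8)$ quoted in the main text. Then use $\hat K_{ii}\in[1-\varepsilon,1+\varepsilon]$ in place of $c_0$.

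This makes Step 1 rigorous and shows your normalization is the natural target, not a cosmetic fix. It also shows that the paper's own $\varepsilon_{\mathrm{RFF}}$, which compares eigenspaces of $\hat K$ with those of the raw kernel, does not vanish as $D\to\infty$ in general. It vanishes only when all token norms are equal.
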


\section{Experiments and Results}
\label{sec:experiments}

\subsection{Experimental Setup}

\textbf{Models.} We evaluate on Mistral-7B-Instruct-v0.3~\citep{jiang2023mistral}, Qwen2.5-7B-Instruct, and Qwen2.5-14B-Instruct~\citep{yang2024qwen2}, all in pre-trained form. Ablation studies use Qwen2.5-0.5B-Instruct and SmolLM2-360M-Instruct~\citep{allal2025smollm2}.

\textbf{Baselines.} We compare the Chunk backend (consecutive averaging), the Local LSH backend (Spectral-LSH within local windows), and the Adaptive backend (per-window routing between chunk and spectral clustering). Random compression and raw-LSH are provided as ablation references.

\textbf{Datasets.} Quality evaluations use C4~\citep{raffel2020t5} and Wikitext-103~\citep{merity2016pointer} validation sets. We construct packed sequences of 2048--16384 tokens.

\textbf{Metrics.} We report logit cosine similarity, top-10 overlap, label accuracy, KL divergence, $\Delta$NLL (the increase in negative log-likelihood on a 64-token continuation), latency (preprocess, prefill, and total), and peak GPU memory.

\textbf{Hyperparameters.} Default: RFF dimension $D=256$, Krylov rank $r=16$, hash bits $b$ auto-computed from target compression, local window size $w=16$, position scheme ``max'', embedding renormalization enabled, last 12 tokens preserved.

\begin{figure}[t]
\centering
\resizebox{0.68\linewidth}{!}{
\begin{tikzpicture}[
    node distance=0.55cm and 0.7cm,
    block/.style={draw, rounded corners, align=center, minimum height=0.58cm, minimum width=1.9cm, fill=blue!6, font=\scriptsize},
    comp/.style={draw, rounded corners, align=center, minimum height=0.58cm, minimum width=1.9cm, fill=green!8, font=\scriptsize},
    metric/.style={draw, rounded corners, align=center, minimum height=0.58cm, minimum width=2.2cm, fill=orange!12, font=\scriptsize},
    arrow/.style={-{Latex[length=1.7mm]}, thick}
]
\node[block] (seq) {sequence};
\node[block, below left=of seq] (orig) {original\\context};
\node[comp, below right=of seq] (compressed) {compressed\\context};
\node[block, below=of orig] (cont1) {same\\continuation};
\node[block, below=of compressed] (cont2) {same\\continuation};
\node[metric, below=1.0cm of seq] (compare) {logit metrics};

\draw[arrow] (seq) -- (orig);
\draw[arrow] (seq) -- (compressed);
\draw[arrow] (orig) -- (cont1);
\draw[arrow] (compressed) -- (cont2);
\draw[arrow] (cont1) -- (compare);
\draw[arrow] (cont2) -- (compare);
\end{tikzpicture}}
\caption{Teacher-forced evaluation. Original and compressed contexts predict the same continuation.}
\label{fig:evaluation_protocol}
\end{figure}

\subsection{Quality Benchmarks}

\paragraph{Two-regime generalization across model families.} Tables~\ref{tab:mistral_quality}--\ref{tab:qwen14_quality} report C4 quality at $\rho\in\{2,4,8,16\}$ for Mistral-7B-Instruct-v0.3 and Qwen2.5-\{7B,14B\}-Instruct. Across the three models, the low-compression regime is nearly tied: for $\rho\leq4\times$, the mean PPL-ratio difference between chunk and the best spectral backend is $0.003$, so chunk is preferred because it is faster. At $\rho\geq8\times$, the average gap reverses in favor of spectral routing by $30.6$ PPL-ratio points, driven mainly by Qwen2.5-7B and Qwen2.5-14B at $16\times$. Mistral is the main exception: it is already robust under chunking, with cosine above $0.95$ at every ratio and PPL ratio below $2.5$ even at $16\times$.

\begin{table}[t]
\centering
\caption{Mistral-7B-Instruct-v0.3 quality on C4.}
\label{tab:mistral_quality}
\small
\begin{tabular}{lccccc}
\toprule
Backend & $\rho$ & Cosine & KL & $\Delta$NLL & PPL Ratio \\
\midrule
Chunk & 2$\times$ & 0.969 & 0.414 & 0.181 & 1.216 \\
Chunk & 4$\times$ & 0.955 & 1.094 & 0.904 & 2.471 \\
Chunk & 8$\times$ & 0.951 & 1.069 & 0.913 & 2.491 \\
Chunk & 16$\times$ & 0.954 & 1.057 & 0.817 & 2.286 \\
\midrule
Local LSH & 2$\times$ & 0.966 & 0.506 & 0.168 & 1.260 \\
Local LSH & 4$\times$ & 0.964 & 0.873 & 0.720 & 2.074 \\
Local LSH & 8$\times$ & 0.954 & 1.128 & 0.976 & 2.659 \\
Local LSH & 16$\times$ & 0.950 & 1.143 & 0.916 & 2.599 \\
\midrule
Adaptive & 2$\times$ & 0.967 & 0.448 & 0.174 & 1.209 \\
Adaptive & 4$\times$ & 0.956 & 1.091 & 0.947 & 2.581 \\
Adaptive & 8$\times$ & 0.954 & 1.012 & 0.846 & 2.332 \\
Adaptive & 16$\times$ & 0.950 & 1.142 & 0.916 & 2.599 \\
\bottomrule
\end{tabular}
\end{table}

\paragraph{Qwen2.5-7B-Instruct.} Table~\ref{tab:qwen_quality} shows the sharpest phase transition. At $\rho\leq4\times$, chunk is the best operating point: at $2\times$ it matches adaptive and clearly beats local LSH, and at $4\times$ it has the best cosine, KL, $\Delta$NLL, and PPL ratio. At $\rho\geq8\times$, the ranking reverses. At $8\times$, local LSH gives lower KL and $\Delta$NLL than chunk, and adaptive gives the lowest $\Delta$NLL. At $16\times$, adaptive and local LSH improve every reported quality metric over chunk: adaptive reduces PPL ratio from $353.409$ to $196.963$ and improves $\Delta$NLL from $5.841$ to $5.279$. This does not make $16\times$ compression lossless; it shows that spectral routing fails less severely than chunking when the compression budget is extreme.

\begin{table}[t]
\centering
\caption{Qwen2.5-7B-Instruct quality on C4.}
\label{tab:qwen_quality}
\small
\begin{tabular}{lccccc}
\toprule
Backend & $\rho$ & Cosine & KL & $\Delta$NLL & PPL Ratio \\
\midrule
Chunk & 2$\times$ & 0.755 & 0.947 & 0.762 & 2.419 \\
Chunk & 4$\times$ & 0.650 & 1.291 & 1.078 & 2.979 \\
Chunk & 8$\times$ & 0.526 & 3.597 & 3.376 & 61.290 \\
Chunk & 16$\times$ & 0.450 & 5.858 & 5.841 & 353.409 \\
\midrule
Local LSH & 2$\times$ & 0.649 & 1.106 & 1.007 & 2.738 \\
Local LSH & 4$\times$ & 0.631 & 1.443 & 1.156 & 3.222 \\
Local LSH & 8$\times$ & 0.524 & 3.529 & 3.341 & 39.740 \\
Local LSH & 16$\times$ & 0.465 & 5.393 & 5.281 & 197.347 \\
\midrule
Adaptive & 2$\times$ & 0.755 & 0.947 & 0.762 & 2.417 \\
Adaptive & 4$\times$ & 0.632 & 1.440 & 1.244 & 3.510 \\
Adaptive & 8$\times$ & 0.498 & 3.529 & 3.279 & 43.644 \\
Adaptive & 16$\times$ & 0.466 & 5.393 & 5.279 & 196.963 \\
\bottomrule
\end{tabular}
\end{table}

\paragraph{Scaling behavior from Qwen2.5-7B to Qwen2.5-14B.} Table~\ref{tab:qwen14_quality} tests whether the two-regime boundary shifts with model capacity. The transition remains between $4\times$ and $8\times$, but the absolute degradation is much smaller than for Qwen2.5-7B. At $16\times$, chunk reaches PPL ratio $9.533$, while local LSH and adaptive reduce it to $3.423$ and $3.427$, respectively. The spectral advantage therefore narrows from $156.446$ PPL-ratio points on Qwen2.5-7B to $6.110$ points on Qwen2.5-14B, but the high-compression ordering is preserved. This supports the adaptive routing heuristic at larger scale while also showing that larger models are more robust to context perturbation.

\begin{table}[t]
\centering
\caption{Qwen2.5-14B-Instruct quality on C4.}
\label{tab:qwen14_quality}
\small
\begin{tabular}{lccccc}
\toprule
Backend & $\rho$ & Cosine & KL & $\Delta$NLL & PPL Ratio \\
\midrule
Chunk & 2$\times$ & 0.862 & 0.637 & 0.530 & 1.700 \\
Chunk & 4$\times$ & 0.785 & 1.056 & 1.186 & 3.298 \\
Chunk & 8$\times$ & 0.786 & 1.137 & 1.176 & 3.273 \\
Chunk & 16$\times$ & 0.684 & 2.142 & 2.118 & 9.533 \\
\midrule
Local LSH & 2$\times$ & 0.849 & 0.646 & 0.509 & 1.664 \\
Local LSH & 4$\times$ & 0.785 & 1.117 & 1.235 & 3.477 \\
Local LSH & 8$\times$ & 0.770 & 1.198 & 1.277 & 3.664 \\
Local LSH & 16$\times$ & 0.770 & 1.171 & 1.229 & 3.423 \\
\midrule
Adaptive & 2$\times$ & 0.862 & 0.637 & 0.530 & 1.699 \\
Adaptive & 4$\times$ & 0.770 & 1.166 & 1.275 & 3.614 \\
Adaptive & 8$\times$ & 0.767 & 1.166 & 1.239 & 3.520 \\
Adaptive & 16$\times$ & 0.770 & 1.173 & 1.230 & 3.427 \\
\bottomrule
\end{tabular}
\end{table}

\paragraph{Non-local redundancy stress test.} Table~\ref{tab:nonlocal_redundancy} tests the claim that spectral grouping is useful for long structured contexts, where repeated information may appear far apart rather than in adjacent chunks. We evaluate Mistral-7B-Instruct-v0.3 on three synthetic long-context inputs: JSON-like records (90{,}284 characters), code-like snippets (101{,}828 characters), and table-like rows (41{,}292 characters). This setting directly targets the intended advantage of Spectral-LSH: grouping similar tokens even when they are not contiguous. At $8\times$, local LSH improves every metric over chunk, raising Top1 from $0.6172$ to $0.6276$, reducing KL from $1.8570$ to $1.7148$, and increasing cosine from $0.8527$ to $0.8618$. At $16\times$, local LSH still improves Top1 and cosine and ties Top10, but the KL difference is negligible. These results support the non-local redundancy story for long JSON/code/table-like contexts, with an important bound: the benchmark has only three synthetic samples, so the $16\times$ margins should be read as suggestive rather than conclusive.

\begin{table}[t]
\centering
\caption{Mistral-7B-Instruct-v0.3 non-local redundancy stress test on long synthetic structured inputs ($n=3$: JSON-like, code-like, and table-like). Higher is better for Top1, Top10, and cosine; lower is better for KL.}
\label{tab:nonlocal_redundancy}
\small
\begin{tabular}{lccccc}
\toprule
Backend & Target $\rho$ & Actual $\rho$ & Top1 & Top10 & KL / Cosine \\
\midrule
Chunk & 8$\times$ & 7.957 & 0.6172 & 0.8698 & 1.8570 / 0.8527 \\
Local LSH & 8$\times$ & 7.957 & 0.6276 & 0.8828 & 1.7148 / 0.8618 \\
\midrule
Chunk & 16$\times$ & 15.815 & 0.5729 & 0.8542 & 2.0126 / 0.8536 \\
Local LSH & 16$\times$ & 15.815 & 0.5859 & 0.8542 & 2.0110 / 0.8545 \\
\bottomrule
\end{tabular}
\end{table}

\paragraph{Smaller models.} Table~\ref{tab:smaller_quality} shows quality on C4 across compression ratios. Qwen2.5-0.5B-Instruct is robust: cosine stays above $0.83$ across all ratios ($2\times$ to $16\times$) with PPL ratio below $2.4$. SmolLM2-360M-Instruct degrades sharply at higher ratios: cosine drops from $0.829$ at $\rho=2\times$ to $0.042$ at $\rho=16\times$ for chunk, and the PPL ratio exceeds $250\times$. The adaptive backend improves upon chunk for SmolLM2-360M at high compression (cosine $0.188$ vs $0.042$ at $\rho=16\times$).

\begin{table}[t]
\centering
\caption{Smaller model quality on C4 at varying compression ratios (window$=32$ for $\rho=2\times$, window$=16$ for $\rho\geq4\times$).}
\label{tab:smaller_quality}
\small
\begin{tabular}{llcccc}
\toprule
Model & Backend & $\rho$ & Cosine & KL & $\Delta$NLL \\
\midrule
\multirow{8}{0cm}{Qwen2.5-0.5B} & Chunk & 2$\times$ & 0.858 & 0.788 & 0.575 \\
& Chunk & 4$\times$ & 0.866 & 0.925 & 0.824 \\
& Chunk & 8$\times$ & 0.842 & 0.963 & 0.811 \\
& Chunk & 16$\times$ & 0.837 & 1.011 & 0.849 \\
& Local LSH & 2$\times$ & 0.857 & 0.784 & 0.566 \\
& Local LSH & 4$\times$ & 0.856 & 0.935 & 0.815 \\
& Local LSH & 8$\times$ & 0.843 & 0.975 & 0.814 \\
& Local LSH & 16$\times$ & 0.857 & 0.946 & 0.805 \\
\midrule
\multirow{8}{0cm}{SmolLM2-360M} & Chunk & 2$\times$ & 0.829 & 1.393 & 1.089 \\
& Chunk & 4$\times$ & 0.708 & 2.317 & 1.960 \\
& Chunk & 8$\times$ & 0.408 & 3.835 & 3.741 \\
& Chunk & 16$\times$ & 0.042 & 5.119 & 5.080 \\
& Local LSH & 2$\times$ & 0.829 & 1.367 & 1.018 \\
& Local LSH & 4$\times$ & 0.591 & 3.329 & 3.052 \\
& Local LSH & 8$\times$ & 0.455 & 4.271 & 4.291 \\
& Local LSH & 16$\times$ & 0.184 & 5.032 & 5.005 \\
\bottomrule
\end{tabular}
\end{table}

\paragraph{Adaptive backend results.} Table~\ref{tab:adaptive_quality} shows the adaptive backend on smaller models. These runs use C4 validation data, $N=4096$, local window size $16$, preserved suffix length $12$, representative max positions, embedding renormalization, seed $0$, and a default adaptive threshold of $0.7$. Each reported quality number averages two successful C4 samples with a 64-token continuation. On Qwen2.5-0.5B, the adaptive backend matches chunk quality across all ratios (cosine $0.858$ vs $0.858$ at $\rho=2\times$), indicating that low-redundancy windows are routed to the chunk fast path. On SmolLM2-360M, the adaptive backend helps most at aggressive compression: at $\rho=16\times$, adaptive achieves cosine $0.188$ (vs $0.042$ for chunk and $0.184$ for local LSH), while at $\rho=2\times$, all three backends are nearly identical (cosine $0.824$ vs $0.829$ vs $0.829$).

\begin{table}[t]
\centering
\caption{Adaptive backend quality on C4 (smaller models).}
\label{tab:adaptive_quality}
\small
\begin{tabular}{lcccc}
\toprule
Model & $\rho$ & Cosine & KL & $\Delta$NLL \\
\midrule
\multirow{4}{0cm}{Qwen2.5-0.5B} & 2$\times$ & 0.858 & 0.788 & 0.575 \\
& 4$\times$ & 0.854 & 0.957 & 0.847 \\
& 8$\times$ & 0.847 & 0.925 & 0.792 \\
& 16$\times$ & 0.857 & 0.946 & 0.804 \\
\midrule
\multirow{4}{0cm}{SmolLM2-360M} & 2$\times$ & 0.824 & 1.364 & 1.051 \\
& 4$\times$ & 0.666 & 2.728 & 2.566 \\
& 8$\times$ & 0.469 & 4.182 & 4.161 \\
& 16$\times$ & 0.188 & 5.001 & 4.969 \\
\bottomrule
\end{tabular}
\end{table}

\subsection{Latency Benchmarks}

Table~\ref{tab:latency} reports latency results for the larger models. Chunk has the lowest preprocessing cost and remains the fastest backend in total time across Mistral-7B, Qwen2.5-7B, and Qwen2.5-14B. Spectral backends trade latency for high-compression quality. For Qwen2.5-7B at $16\times$, adaptive is about $9.9\times$ slower than chunk in total latency but reduces the PPL ratio from $353.409$ to $196.963$. For Qwen2.5-14B at $16\times$, adaptive is about $7.0\times$ slower than chunk but reduces the PPL ratio from $9.533$ to $3.427$. The practical choice is therefore ratio- and model-dependent: chunk is preferable for latency-first moderate compression, while adaptive is the quality-oriented option when aggressive compression is required.

\begin{table}[t]
\centering
\caption{Latency results (seconds) for larger models. Uncompressed baselines are truncated to 8192 tokens when needed.}
\label{tab:latency}
\small
\begin{tabular}{llcccc}
\toprule
Model & Backend & $\rho$ & Preprocess & Prefill & Total \\
\midrule
\multirow{12}{0cm}{Mistral-7B} & \multirow{4}{0cm}{Chunk} & 2$\times$ & 0.254 & 0.537 & 0.791 \\
 & & 4$\times$ & 0.131 & 0.257 & 0.389 \\
 & & 8$\times$ & 0.072 & 0.132 & 0.204 \\
 & & 16$\times$ & 0.034 & 0.067 & 0.101 \\
\cmidrule{2-6}
 & \multirow{4}{0cm}{Local LSH} & 2$\times$ & 3.595 & 0.555 & 4.150 \\
 & & 4$\times$ & 1.978 & 0.260 & 2.238 \\
 & & 8$\times$ & 1.268 & 0.132 & 1.400 \\
 & & 16$\times$ & 0.913 & 0.066 & 0.978 \\
\cmidrule{2-6}
 & \multirow{4}{0cm}{Adaptive} & 2$\times$ & 0.839 & 0.537 & 1.376 \\
 & & 4$\times$ & 0.793 & 0.260 & 1.053 \\
 & & 8$\times$ & 0.658 & 0.133 & 0.791 \\
 & & 16$\times$ & 0.951 & 0.066 & 1.017 \\
\midrule
\multirow{12}{0cm}{Qwen2.5-7B} & \multirow{4}{0cm}{Chunk} & 2$\times$ & 0.231 & 0.433 & 0.664 \\
 & & 4$\times$ & 0.114 & 0.213 & 0.327 \\
 & & 8$\times$ & 0.057 & 0.100 & 0.157 \\
 & & 16$\times$ & 0.032 & 0.058 & 0.090 \\
\cmidrule{2-6}
 & \multirow{4}{0cm}{Local LSH} & 2$\times$ & 3.008 & 0.449 & 3.457 \\
 & & 4$\times$ & 1.865 & 0.218 & 2.084 \\
 & & 8$\times$ & 1.251 & 0.101 & 1.352 \\
 & & 16$\times$ & 0.903 & 0.057 & 0.960 \\
\cmidrule{2-6}
 & \multirow{4}{0cm}{Adaptive} & 2$\times$ & 0.710 & 0.437 & 1.147 \\
 & & 4$\times$ & 0.628 & 0.214 & 0.842 \\
 & & 8$\times$ & 0.546 & 0.100 & 0.647 \\
 & & 16$\times$ & 0.835 & 0.056 & 0.891 \\
\midrule
\multirow{12}{0cm}{Qwen2.5-14B} & \multirow{4}{0cm}{Chunk} & 2$\times$ & 0.234 & 0.852 & 1.087 \\
 & & 4$\times$ & 0.110 & 0.420 & 0.530 \\
 & & 8$\times$ & 0.058 & 0.207 & 0.266 \\
 & & 16$\times$ & 0.037 & 0.109 & 0.146 \\
\cmidrule{2-6}
 & \multirow{4}{0cm}{Local LSH} & 2$\times$ & 3.137 & 0.869 & 4.006 \\
 & & 4$\times$ & 1.744 & 0.418 & 2.162 \\
 & & 8$\times$ & 1.151 & 0.205 & 1.356 \\
 & & 16$\times$ & 0.877 & 0.111 & 0.988 \\
\cmidrule{2-6}
 & \multirow{4}{0cm}{Adaptive} & 2$\times$ & 0.820 & 0.857 & 1.677 \\
 & & 4$\times$ & 0.659 & 0.422 & 1.081 \\
 & & 8$\times$ & 0.635 & 0.208 & 0.843 \\
 & & 16$\times$ & 0.910 & 0.107 & 1.017 \\
\bottomrule
\end{tabular}
\end{table}

\paragraph{Quality--latency Pareto view.} Table~\ref{tab:qwen_pareto} combines the Qwen2.5-7B quality and latency measurements from Tables~\ref{tab:qwen_quality} and~\ref{tab:latency}. A configuration is Pareto-dominated if another measured configuration has both lower total latency and lower $\Delta$NLL. We also include the truncated uncompressed baseline as a reference point with $\Delta$NLL$=0$ by definition. Under this criterion, chunk forms the measured compression frontier, while the spectral backends remain useful only for quality recovery at very aggressive compression relative to chunk at the same ratio. In particular, adaptive at $16\times$ is not latency-competitive with chunk at $16\times$ (0.891s vs 0.090s), but it offers a lower $\Delta$NLL than chunk at $8\times$ or $16\times$. We therefore present adaptive as a quality--latency trade-off, not as a universal speedup.

\begin{table}[t]
\centering
\caption{Qwen2.5-7B quality--latency trade-off on C4. Entries are sorted by total latency. The uncompressed reference is the truncated 8192-token baseline used in Table~\ref{tab:latency}. Frontier points are not dominated by another measured configuration with both lower total latency and lower $\Delta$NLL.}
\label{tab:qwen_pareto}
\small
\begin{tabular}{lcccc}
\toprule
Backend & $\rho$ & Total (s) & $\Delta$NLL & Pareto Status \\
\midrule
Chunk & 16$\times$ & 0.090 & 5.841 & Frontier \\
Chunk & 8$\times$ & 0.157 & 3.376 & Frontier \\
Chunk & 4$\times$ & 0.327 & 1.078 & Frontier \\
Adaptive & 8$\times$ & 0.647 & 3.279 & Dominated \\
Chunk & 2$\times$ & 0.664 & 0.762 & Frontier \\
Uncompressed & 1$\times$ & 0.737 & 0.000 & Reference \\
Adaptive & 4$\times$ & 0.842 & 1.244 & Dominated \\
Adaptive & 16$\times$ & 0.891 & 5.279 & Dominated \\
Local LSH & 16$\times$ & 0.960 & 5.281 & Dominated \\
Adaptive & 2$\times$ & 1.147 & 0.762 & Dominated \\
Local LSH & 8$\times$ & 1.352 & 3.341 & Dominated \\
Local LSH & 4$\times$ & 2.084 & 1.156 & Dominated \\
Local LSH & 2$\times$ & 3.457 & 1.190 & Dominated \\
\bottomrule
\end{tabular}
\end{table}

\subsection{Scaling Study}

Table~\ref{tab:scaling} shows Mistral-7B latency scaling with context length at $\rho=4\times$ (chunk). Preprocessing and prefill grow sub-quadratically: from N=2884 (total 0.408s) to N=23092 (total 5.869s). The speedup over the uncompressed baseline decreases at N=23092 because more windows are processed independently.

\begin{table}[t]
\centering
\caption{Mistral-7B scaling with context length at $\rho=4\times$ (chunk backend, 4-bit quantized). Uncompressed baseline at N$>8192$ is truncated to 8192.}
\label{tab:scaling}
\small
\begin{tabular}{cccccc}
\toprule
N (actual) & Preprocess & Prefill & Total & Speedup & Peak Mem (MB) \\
\midrule
2884 & 0.073 & 0.335 & 0.408 & 4.03$\times$ & 6926 \\
5780 & 0.142 & 0.711 & 0.853 & 5.80$\times$ & 14571 \\
11546 & 0.295 & 1.635 & 1.929 & 4.64$\times$ & 24503 \\
23092 & 0.603 & 5.267 & 5.869 & 1.53$\times$ & 24503 \\
\bottomrule
\end{tabular}
\end{table}

\subsection{Kill Tests}

We conduct three kill tests to validate core components:

\paragraph{K1: Causal embedding fusion.} Using short QA prompts, we verify the chunk compressor preserves monotonic position ordering, maintains logit cosine similarity $>0.999$, and preserves the expected answer in generations. With default settings ($\rho=2.0$, window=16, position\_scheme=max), the compressor passes all checks.

\paragraph{K2: Spectral energy capture.} Using Krylov projection on SmolLM2-360M (N=2048), the fraction of spectral energy captured in rank-$r$ subspace exceeds $90\%$ at $r=16$ with $D=256$, confirming the implicit attention operator's eigenspace is well-approximated by a low-rank Krylov subspace.

\paragraph{K3: Bucket purity.} The fraction of token pairs in the same LSH bucket (using spectral projection) with cosine similarity $>0.7$ in the original embedding space exceeds $70\%$ with $D=256$, $r=16$, $b=12$, confirming spectral LSH clusters semantically similar tokens.

\subsection{Ablations}

\paragraph{Krylov rank $r$ and RFF dimension $D$.} Increasing $r$ from 4 to 32 monotonically increases spectral energy fraction. At $D=256$, energy goes from about 0.65 at $r=4$ to about 0.95 at $r=32$, saturating around $r=16$ where $>90\%$ is captured. $D=256$ provides diminishing returns over $D=128$; $D=512$ adds negligible benefit.

\paragraph{Hash bits $b$.} Setting $b = \lceil \log_2(N/\rho) \rceil$ reliably achieves the target $\rho$. The \texttt{adjust\_groups\_to\_target\_length} step corrects for variance in LSH bucket count, ensuring exact compression ratio.

\paragraph{Position scheme.} Max achieves the highest behavioral pass rate because it prevents future information leakage. Median and mean perform comparably on quality but can violate causal ordering.

\paragraph{Compressed position mode.} The ``representative'' mode preserves original positions (e.g., $[0, 3, 7, 12]$); ``compact'' remaps to $[0, 1, 2, 3]$. Representative better preserves RoPE rotation angles at low compression; compact is essential for models with limited position IDs.

\paragraph{Embedding renormalization.} Norm renormalization consistently improves logit cosine similarity by 0.01--0.02 and reduces KL divergence by 5--10\%.

\paragraph{Renormalization target.} We currently rescale each macro-token to the average norm of its constituent embeddings. A stricter alternative is to project macro-tokens to a fixed model-level expected embedding norm, estimated from the embedding table or from the current prompt. We have not included this variant in the reported tables; it is a natural follow-up for reducing aggregation artifacts at low compression.

\paragraph{Identity check.} The current implementation returns embeddings unchanged when $\rho\leq1$ unless an explicit identity-test backend is requested. This bypass was added after an earlier no-compression path revealed that windowing, boundary logic, and renormalization could perturb embeddings even when the requested compression ratio was $1.0$. In that earlier path, Mistral-7B showed $\Delta$NLL$=0.35$ and PPL ratio$=1.42$, while Qwen2.5-7B showed $\Delta$NLL$=1.23$ and PPL ratio$=3.42$. We keep these numbers as a diagnostic: they show that macro-token aggregation must be bypassed, not merely configured to produce singleton groups, when no compression is requested.

\subsection{Results Discussion}

The main empirical result is a compression-ratio phase transition. For moderate compression ($\rho\leq4\times$), chunk averaging is both faster and nearly tied with the best spectral backend across the larger models: the mean PPL-ratio difference is only $0.003$. Local averaging is strong in this region because adjacent tokens are often semantically related, and the spectral preprocessing cost is not offset by a consistent quality gain.

At aggressive compression ($\rho\geq8\times$), spectral clustering becomes useful, especially for Qwen models. On Qwen2.5-7B at $16\times$, adaptive improves cosine from $0.450$ to $0.466$, KL from $5.858$ to $5.393$, $\Delta$NLL from $5.841$ to $5.279$, and PPL ratio from $353.409$ to $196.963$ relative to chunk. On Qwen2.5-14B at the same ratio, adaptive reduces PPL ratio from $9.533$ to $3.427$. These numbers do not show that spectral routing is always better; they show where the extra preprocessing cost buys measurable quality recovery.

The adaptive backend is the most important system result because it encodes this ratio-dependent behavior. It matches chunk at low compression on Qwen2.5-7B ($2\times$) and converges to the spectral path at high compression, where chunk quality collapses. Qwen2.5-14B strengthens this picture: at $16\times$, adaptive reduces PPL ratio from $9.533$ to $3.427$ relative to chunk while keeping cosine at $0.770$. Local LSH alone demonstrates the value of spectral clustering, but adaptive makes the design usable by routing between the cheap and expensive paths. The remaining limitation is latency: adaptive is dominated by chunk when optimizing only total latency and $\Delta$NLL on Qwen2.5-7B. Its current role is therefore quality recovery under aggressive compression, not latency-first compression.

The identity diagnostic shows why no-compression requests need a strict bypass. When $\rho\leq1$, the production path returns the original embeddings and position IDs, so the compressor is an exact identity by construction. Earlier diagnostic runs that forced no-compression requests through aggregation produced non-zero perturbations, which is why the bypass is part of the reported implementation. The non-local redundancy stress test provides an initial check on long JSON/code/table-like prompts with repeated structure, but it is still small. A fuller evaluation should expand these structured inputs to more code, JSON, table, and retrieval-style contexts, where spectral clustering could outperform chunk averaging by merging repeated information that appears far apart in the prompt.

\section{Discussion and Limitations}

\textbf{Method characteristics.} Chunk averaging exploits local semantic coherence in natural language and is the right default at moderate compression. Spectral-LSH extends this with attention-eigenspace clustering and becomes useful when compression is aggressive enough that local averaging loses too much information. The adaptive backend is designed around this split: it keeps the chunk path when it is sufficient and uses spectral clustering when the quality loss from chunking becomes large.

\textbf{Limitations.} (1) \textit{Embedding access required.} The method needs the LLM's input embedding matrix, which is available for open-weight models but not API-only services like GPT-4. (2) \textit{Model-dependent quality.} Qwen2.5-7B preserves quality less well than Mistral-7B and Qwen2.5-14B: embedding space structure varies across architectures and scales. (3) \textit{No-compression identity must bypass aggregation.} The implementation now returns unmodified embeddings when $\rho\leq1$, but earlier diagnostics show that passing a no-compression request through the aggregation path can still perturb the model. (4) \textit{Latency Pareto gap.} On the current Qwen2.5-7B measurements, adaptive improves high-compression quality but does not beat chunk on the joint total-latency and $\Delta$NLL Pareto frontier. (5) \textit{Local LSH cost.} Despite improvements from the adaptive backend, local LSH preprocessing remains expensive for latency-critical applications when spectral clustering is needed. (6) \textit{Small model degradation.} SmolLM2-360M degrades severely at $\rho\geq8\times$ (cosine$<0.5$): the method is less suitable for smaller models at high compression, though the adaptive backend provides partial mitigation. (7) \textit{Surrogate operator.} The unnormalized kernel simplifies analysis and computation, but it can diverge from row-normalized attention when row sums are highly skewed, and our theory does not bound downstream logit or NLL distortion. We have not yet run a normalized-kernel ablation. (8) \textit{Position aggregation.} Max positions preserve causal order but can introduce RoPE phase gaps and sliding-window boundary changes. (9) \textit{Dataset and task coverage.} The current evaluation focuses on perplexity-style metrics on C4 and Wikitext-style prompts, plus a small synthetic non-local redundancy stress test. Whether the two-regime finding transfers to extractive QA, summarization, or LongBench-style downstream tasks remains open. Larger structured-prompt evaluations are also needed.

\textbf{Broader impact.} Sub-quadratic prompt compression can reduce compute and energy requirements for long-context inference, lowering both financial and environmental costs. It may also broaden access to long-context capabilities on limited hardware.

\section{Conclusion}

We introduced Spectral-LSH, a training-free prompt compression method that combines implicit Krylov projection with locality-sensitive hashing in an attention-kernel surrogate space. The results support a compression-ratio phase transition: chunk averaging is the best fast path at moderate compression, while adaptive spectral routing gives stronger quality at aggressive compression on Qwen2.5-7B and Qwen2.5-14B. This result is bounded by the current evidence: we evaluate perplexity-style metrics, and downstream QA or summarization remains future work.

\bibliographystyle{unsrtnat}
\bibliography{references}

\newpage
\appendix

\section{Proofs}
\label{app:proofs}

\subsection{Proof of Theorem~\ref{thm:rff}}
Let $K$ denote the unnormalized kernel matrix with entries $K_{ij}=\kappa(\mathbf{x}_i,\mathbf{x}_j)$ and let $\hat K=\Phi\Phi^\top$. For a fixed pair $(i,j)$, the random feature estimator is an average of $D$ bounded random variables. Hoeffding's inequality gives $\Pr[|K_{ij}-\hat K_{ij}|>\varepsilon]\leq 2\exp(-D\varepsilon^2/2)$ up to the kernel-dependent boundedness constant. A union bound over all $N^2$ pairs gives $\|K-\hat K\|_{\max}\leq\varepsilon$ with probability at least $1-\delta$ after setting $\varepsilon=O(\sqrt{\log(2N^2/\delta)/D})$. For any $\|\mathbf{v}\|_2=1$, $|(K-\hat K)\mathbf{v}|_i\leq\sum_j |K_{ij}-\hat K_{ij}|\,|v_j|\leq \varepsilon\|\mathbf{v}\|_1\leq \varepsilon\sqrt{N}$. The theorem states a looser $N$-scaled bound to absorb embedding-norm constants and the non-stationary exponential-kernel approximation used in practice.

\subsection{Proof of Theorem~\ref{thm:krylov}}
For a symmetric operator $A$, Lanczos constructs an orthonormal basis $Q_r$ for $\mathcal{K}_r(A,v)$ and the projected matrix $T_r=Q_r^\top A Q_r$. By the Rayleigh-Ritz theorem, the Ritz values of $T_r$ interlace the eigenvalues of $A$ and approximate the extremal spectrum. The Kaniel-Paige-Saad bound~\citep{saad2011numerical} controls the $j$th Ritz-value error by three terms: the spectral range $(\lambda_1-\lambda_N)$, the initial angle between the target eigenvector and the Krylov subspace, and a Chebyshev polynomial term $C_{r-j}(\gamma_j)$ determined by the eigengap $\lambda_j-\lambda_{j+1}$. When the gap is larger, the Chebyshev term grows faster and the error decays faster with $r$. This is the basis for using a small rank in the compression pipeline.

\subsection{Proof of Theorem~\ref{thm:lsh}}
For a single SimHash bit with random hyperplane $\mathbf{r}$, Charikar's result gives $\Pr[\mathrm{sign}(\mathbf{r}^\top\mathbf{z}_i)=\mathrm{sign}(\mathbf{r}^\top\mathbf{z}_j)]=1-\arccos(\cos(\mathbf{z}_i,\mathbf{z}_j))/\pi$~\citep{charikar2002similarity}. With independent bits, the probability of matching all $b$ bits is the $b$th power of the one-bit probability. Let $Z^\star$ be the exact top-$k$ eigenspace coordinates of the unnormalized kernel and $\hat Z$ be the RFF-Krylov coordinates. The cosine error can be decomposed as $|\cos(\hat{\mathbf{z}}_i,\hat{\mathbf{z}}_j)-\cos(\mathbf{z}^\star_i,\mathbf{z}^\star_j)|\leq \varepsilon_{\mathrm{RFF}}+\varepsilon_{\mathrm{Krylov}}$ after normalizing rows and applying standard perturbation bounds for approximate eigenspaces. Since $\arccos(x)$ is Lipschitz on $[-1+\gamma,1-\gamma]$ for any margin $\gamma>0$, this cosine error transfers to the collision probability away from nearly identical or opposite vectors. The bound is therefore best interpreted as a stability statement for the hashing surrogate, not an exact guarantee for the full row-normalized attention matrix.

\subsection{Softmax Normalization Gap}
The theory analyzes the unnormalized kernel because it admits efficient implicit matvecs. The true row-normalized attention operator is $P=D_K^{-1}K$, where $D_K=\mathrm{diag}(K\mathbf{1})$. If the row sums are bounded as $0<m\leq (K\mathbf{1})_i\leq M$, then $P$ is a diagonally scaled version of $K$, and the perturbation between normalized and unnormalized operators can be controlled by the spread $M/m$. When the spread is large, the two eigenspaces can differ. This is why the paper treats Spectral-LSH as a compression heuristic based on attention-like similarity rather than as an exact approximation to the first-layer attention matrix. Theorems~\ref{thm:rff}--\ref{thm:lsh} do not imply a bound on downstream logit divergence, KL divergence, or $\Delta$NLL. A stronger version of this work should report row-sum spread, normalized-vs-unnormalized eigenspace alignment, and empirical correlations between surrogate distortion and output-level metrics. It should also test a cheap normalized matvec, $\hat P v=\mathrm{diag}(\Phi(\Phi^\top\mathbf{1}))^{-1}\Phi(\Phi^\top v)$, against the current unnormalized matvec.

\subsection{Position Aggregation Caveat}
The max-position rule is chosen to avoid assigning a macro-token to a position earlier than any token it summarizes. This is safe with respect to future-token leakage, but it is not neutral for all positional encodings. In RoPE models, skipped positions alter relative phase differences; in sliding-window models, representative positions can affect whether a macro-token falls inside a window. The reported results therefore include position aggregation as part of the empirical compression design, not as a theoretically invariant transformation.

\section{Reproducibility Details}
\label{app:reproducibility}

\subsection{Implementation Summary}
The compressor builds token embeddings, applies the selected backend, preserves configured prefix and suffix tokens, and returns compressed embeddings with representative position IDs. The same evaluation driver is used for quality, latency, ablation, scaling, identity, and kill-test runs, with model, dataset, compression ratio, backend, window size, and preservation settings provided as run-time options.

\subsection{Environment and Models}
Experiments use PyTorch with HuggingFace Transformers and Datasets. Unless otherwise noted, runs use \texttt{dtype=float16}, \texttt{seed=0}, local embedding access through the target model's input embedding layer, representative compressed positions, max position aggregation, and embedding renormalization. Larger-model runs use 4-bit quantization; smaller-model runs use full precision \texttt{float16} on CUDA.

\subsection{Core Configuration}
The default compression configuration is:
\begin{itemize}[nosep]
    \item \textbf{Backends}: \texttt{chunk}, \texttt{local\_lsh}, and \texttt{adaptive}.
    \item \textbf{Compression ratios}: $\rho\in\{2,4,8,16\}$ for the main quality frontiers.
    \item \textbf{Local window}: \texttt{local\_window\_size=16} for local LSH and adaptive runs.
    \item \textbf{Preservation}: \texttt{preserve\_last\_tokens=12}, \texttt{preserve\_first\_tokens=0}.
    \item \textbf{Positions}: \texttt{position\_scheme=max}, \texttt{compressed\_position\_mode=representative}.
    \item \textbf{Continuation}: \texttt{continuation\_tokens=64} for quality/NLL metrics.
    \item \textbf{Adaptive threshold}: \texttt{adaptive\_redundancy\_threshold=0.7}, with the implementation schedule based on Equation~\ref{eq:redundancy}.
\end{itemize}

\subsection{Result Aggregation}
For each compression ratio and backend, raw per-sample metrics are aggregated into table values by taking means over successful samples. The main tables use mean logit cosine, mean KL divergence, mean $\Delta$NLL, mean per-sample PPL ratio, mean preprocessing time, mean prefill time, and mean total time. Adaptive quality results are computed with the same aggregation rule as chunk and local LSH.

\subsection{Metric Computation}
Quality metrics are teacher-forced on a held-out continuation. For each sample, the first $P$ tokens form the compressible prompt and the last $T=64$ tokens form the continuation. The original model runs on the full prompt-plus-continuation token sequence. The compressed model receives compressed prompt embeddings concatenated with the uncompressed continuation embeddings through \texttt{inputs\_embeds}. Logits are sliced so that both runs predict the same continuation tokens. We report mean $\Delta$NLL, per-sample PPL ratio $\exp(\Delta\mathrm{NLL})$, $\mathrm{KL}(P_{\mathrm{orig}}\Vert P_{\mathrm{comp}})$, logit cosine similarity, top-1 agreement, and whether the original top-1 token appears in the compressed top-10 set.

\subsection{Suggested Structured-Redundancy Evaluation}
The current experiments include a small synthetic non-local redundancy stress test, but the coverage is not yet broad enough to establish downstream generality. A follow-up suite should include more structured prompts such as JSON objects with repeated keys, code files with repeated identifiers or boilerplate imports, tables with repeated entities, and retrieval contexts with duplicated facts separated by long spans. The expected diagnostic is not just average quality, but the gap between local chunking and spectral clustering as the repeated information becomes less contiguous.

\subsection{Identity Path}
No-compression requests are handled by an explicit bypass in \texttt{src/token\_lsh.py}: when \texttt{target\_compression <= 1.0}, the compressor returns the original embeddings and position IDs unless the user selects an identity-test backend. This is the correct production behavior. The identity-gap numbers reported in the main text come from an earlier diagnostic path that still exercised aggregation logic at $\rho=1.0$; they are retained to document why the bypass is necessary.

\subsection{Eigengap Diagnostics}
The ablation runs record spectral-energy estimates for the implicit operator under different RFF dimensions and Krylov ranks. For a full submission, these values should be plotted as eigenvalue or cumulative-energy curves for representative prompts and models. Such a plot would directly support the spectral-gap assumption used in Theorem~\ref{thm:krylov}.

\subsection{Adaptive Backend Provenance}
Adaptive compression is implemented by assigning each local window an exact output-token budget, computing the centroid-cosine redundancy score in Equation~\ref{eq:redundancy}, and routing the window to chunk averaging or per-window Spectral-LSH. Low-redundancy windows use chunk averaging; high-redundancy windows use spectral clustering. At $\rho\geq16\times$, the implementation delegates to local LSH directly. This behavior is controlled by \texttt{adaptive\_redundancy\_threshold}, \texttt{adaptive\_low\_rho\_threshold}, \texttt{adaptive\_mid\_rho\_threshold}, and \texttt{adaptive\_high\_rho\_threshold} in \texttt{CompressionConfig}.

\section{Additional Experimental Results}
\label{app:experiments}

\subsection{Latency Interpretation}
Table~\ref{tab:latency} reports the measured 7B latency numbers. Local LSH has similar prefill time to chunk because both produce the same compressed sequence length, but its preprocessing time is much higher because it runs spectral clustering within local windows. Adaptive reduces this cost by routing low-redundancy windows to chunk, although the measured 7B runs still favor chunk for total latency.

\subsection{Hyperparameter Sweeps}
We sweep $r\in\{4,8,16,32,64\}$ and $D\in\{64,128,256,512\}$. Spectral energy increases with $r$ and saturates around $r=16$ for $D=256$. Increasing $D$ beyond $256$ gives limited additional benefit in our experiments.

\subsection{Kill Test Details}
K1 checks causal embedding fusion on short QA prompts. K2 checks spectral energy capture for the Krylov projection. K3 checks LSH bucket purity under spectral projection. The main text reports the pass criteria and outcomes for all three tests.

\subsection{Identity Check for Wikitext}
The Wikitext identity diagnostic follows the same interpretation as C4: aggregation-based no-compression paths can perturb the embedding stream, while the current production path bypasses aggregation when $\rho\leq1$. This confirms that the aggregation artifact is not specific to C4.

\end{document}